\documentclass{article}

\usepackage[english]{babel}
\usepackage[letterpaper,top=2cm,bottom=2cm,left=3cm,right=3cm,marginparwidth=1.75cm]{geometry}
\usepackage{amsmath}
\usepackage{amssymb}
\usepackage{amsthm}
\usepackage{mathtools}
\usepackage{dsfont}
\usepackage{amsfonts}
\usepackage{graphicx}
\usepackage{natbib}
\usepackage{thmtools}
\usepackage{hyperref}
\usepackage{nameref,cleveref}
\usepackage{enumitem}
\usepackage{tikz}
\usepackage{cancel}

\usepackage{color,xcolor}
\usepackage[normalem]{ulem}
\usepackage{mdframed}

\newcommand{\remove}[1]{}

\newcommand{\AAA}{\mathcal{A}}
\newcommand{\BBB}{\mathcal{B}}
\newcommand{\DDD}{\mathcal{D}}
\newcommand{\CCC}{\mathcal{C}}

\newcommand{\R}{\mathbb{R}}
\newcommand{\N}{\mathbb{N}}

\newcommand{\Sim}{{\rm Sim}}

\newcommand{\Lap}{{\rm Lap}}
\newcommand{\SideInfo}{{\rm SideInfo}}

\newcommand{\eps}{\varepsilon}

\newcommand{\1}{\mathds{1}}

\newcommand{\gray}[1]{\textcolor{gray}{#1}}

\theoremstyle{definition}
\newtheorem{theorem}{Theorem}[section]
\newtheorem{definition}[theorem]{Definition}

\newtheorem{corollary}[theorem]{Corollary}
\newtheorem{remark}[theorem]{Remark}
\newtheorem{example}[theorem]{Example}

\DeclareMathOperator{\error}{error}

\crefname{definition}{Definition}{Definitions}
\Crefname{definition}{Definition}{Definitions}

\crefname{type}{Type}{Types}
\Crefname{type}{Type}{Types}

\crefname{target}{Target}{Targets}
\Crefname{target}{Target}{Targets}

\crefname{attack}{Attack}{Attacks}
\Crefname{attack}{Attack}{Attacks}

\title{Protecting the Undeleted in Machine Unlearning}
\author{
Aloni Cohen\thanks{University of Chicago. {\tt <aloni@uchicago.edu>}.} 
\and 
Refael Kohen\thanks{Tel Aviv University. {\tt <refael.kohen@gmail.com>}. Partially supported by the Israel Science Foundation (grant 1419/24) and the Blavatnik Family Foundation.}
\and 
Kobbi Nissim\thanks{Dept.\ of Computer Science, Georgetown University. {\tt <kobbi.nissim@georgetown.edu>}. Work partly supported by NSF award \#2217678.} 
\and 
Uri Stemmer\thanks{Tel Aviv University. {\tt <u@uri.co.il>}. Partially supported by the Israel Science Foundation (grant 1419/24) and the Blavatnik Family Foundation.}}

\begin{document}

\date{February 18, 2026}

\maketitle

\begin{abstract}
Machine unlearning aims to remove specific data points from a trained model, often striving to emulate ``perfect retraining'', i.e., producing the model that would have been obtained had the deleted data never been included. We demonstrate that this approach, and security definitions that enable it, carry significant privacy risks for the {\em remaining} (undeleted) data points. We present a reconstruction attack showing that for certain tasks, which can be computed securely without deletions, a mechanism adhering to perfect retraining allows an adversary controlling merely $\omega(1)$ data points to reconstruct almost the entire dataset merely by issuing deletion requests. We survey existing definitions for machine unlearning, showing they are either susceptible to such attacks or too restrictive to support basic functionalities like exact summation. To address this problem, we propose a new security definition that specifically safeguards undeleted data against leakage caused by the deletion of other points. We show that our definition permits several essential functionalities, such as bulletin boards, summations,  and statistical learning.\\
\end{abstract}

\section{Introduction}

Sensitive personal data underlies the training of machine learning models across many domains, necessitating compliance with legal data protection standards such as the EU General Data Protection Regulation (GDPR) and the California Consumer Privacy Act (CCPA). 
These legal standards provide individuals with a number of protections and rights, including the right to request that their specific information be deleted, a.k.a.\ the Right to be Forgotten (RTBF). This legal provision gave rise to the field of {\em Machine Unlearning}, a branch of machine learning focused on enabling the removal of elements from the training data without needing to rebuild models from scratch.

We now make our discussion more formal and survey several influential definitions for machine unlearning.
Consider a data curator $\CCC$ that executes an algorithm $\AAA$ on an initial input dataset $D_0=(x_1,\dots,x_n)$ to obtain an outcome $h_0$ (say a hypothesis or a  statistic). 
Throughout the execution, at every time $i$, the curator $\CCC$ receives a {\em deletion request} of the form ``delete point $x_j$'' and needs to respond with an updated outcome $h_i$ that should reflect the state of the dataset after this $i$th deletion request. 
Starting from the work of \citet{cao2015towards}, {\em machine unlearning} was defined as a computational problem, where the ideal solution is for the curator to retrain the learning algorithm $\AAA$ from scratch in every step on the remaining dataset. 
The goal of the machine unlearning algorithm is to emulate this ideal solution (perfectly or approximately) faster, i.e., without having to spend the work needed to re-run $\AAA$ for every deletion request. 
For example, if $\AAA$ returns the exact sum $h_0=x_1+\cdots+x_n$, deleting $x_j$ could be done via a single subtraction $h_1 = h_0-x_j$ (instead of summing the $n$ numbers again). This way, deletion is performed in constant rather than linear time.

Following \citet{cao2015towards}, several other definitions were suggested, including by \citet{ginart2019making,guo2019certified,sekhari2021remember,gupta2021adaptive}, with the same goal of mimicking/approximating the behavior of retraining from scratch in every iteration (``the ideal solution'') while saving on runtime. 
These definitions differ in the approximation metrics they consider and in their environmental settings (such as whether or not the deletion requests are adaptive to past outputs). 
The focus of our work is not on runtime; therefore, for our purposes, all these definitions can be grouped together as {\em definitions that are satisfied by perfect retraining}. See Section~\ref{sec:otherRelated} for a survey of prior definitions.

\paragraph{Perfect retraining and privacy.} While the right to delete one's information is rooted in privacy standards such as the GDPR, the relationship between data deletion and privacy is not straightforward.
In particular, it is well-known that definitions satisfied by perfect retraining fail under a simple ``differencing attack'' where an adversary learns information about the deleted item by examining the output hypothesis before and after a deletion (e.g., in the exact sum example, if an attacker sees the outcome before and after Alice deletes her input, then the attacker learns Alice's input exactly). 

Nevertheless, one may argue that such compromise of Alice's personal data is acceptable as long as Alice is aware of her privacy risks and still wants to exercise control over her data and delete it from the model. 
However, as we will see, there are cases where the deletion of Alice's data can compromise the privacy of {\em other} users who did not ask for their data to be deleted. 
More generally, we will show that there are tasks for which 
any machine unlearning mechanism, under any definition satisfied by perfect retraining, allows an attacker
controlling a small number of parties to reconstruct essentially all the training data. This, we believe, should be prevented.

\paragraph{Privacy-oriented definitions of deletion.}
There are several privacy-oriented definitions in the literature that are {\em not} satisfied by perfect retraining (and thus might not be susceptible to our attacks). Perhaps the first example of such a (folklore) definition would be one that requires the use of differential privacy (DP) with small enough privacy parameters to capture group privacy for all the deleted users.\footnote{That is, at the outset of the execution, the curator releases some DP computation on the initial dataset (with very small privacy parameters) and then simply ignores all deletion requests.} 
However, to the best of our knowledge, all existing definitions that are not satisfied by perfect retraining are too restrictive in the sense that they disallow natural and important functionalities, such as a simple sum or a bulletin board. Again, we refer the reader to Section~\ref{sec:otherRelated} for a survey of prior definitions.

\subsection{Our contributions}\label{sec:ourCont}

\paragraph{Perfect retraining allows exposure of undeleted elements.}
We present settings in which a coalition of a small number of parties can reconstruct the information of other parties by issuing data deletion requests. In the simplest setting, one party, Alice, who asks for her data to be deleted, causes the information of another party, who did not ask for their data to be deleted, to be exposed. In this scenario:  
(i) Alice could not learn the other party's data had she not deleted her data, and 
(ii) Alice can learn the other party's data once the deletion is performed.
We believe that enabling such behavior is unacceptable; thus, we argue that definitions that aim to mimic or approximate perfect retraining should be used with caution. 

A simple example where the deletion of one party's data causes another's to be revealed is in median computation. Given $n$ data points $x_1\leq x_2\leq \cdots \leq x_n$ their median is $x_{\frac{n+1}{2}}$ if $n$ is odd and $x_{\frac{n}{2}}$ otherwise. Observe that the initial median computation reveals one data point. Assume $n$ is odd and that Alice's entry is greater than the median. When Alice asks to delete her data, a value smaller than the median is exposed -- the input provided by some another user. (See Appendix~\ref{sec:kMeansClustering} for another example).

The median example is perhaps somewhat controversial because the exact median itself is a highly non-private computation even without any deletions. Thus, one might argue that the privacy violation generated in this context is more because of the functionality itself rather than because of the deletions.\footnote{Taking this argument to the limit, consider the functionality $f$ where given a dataset $D$ we have that $f(D)=\emptyset$ if $|D|\geq n$, and $f(D)=D$ otherwise. Now, if the initial dataset is of size $n$, then by deleting her data Alice causes the function $f$ to leak all of the (remaining) dataset.}
However, in Section~\ref{sec:attacks} we present a task %
which can be performed with differential privacy (without supporting deletions). Nevertheless, an attacker controlling only a small number of parties, asking them to delete their data one by one, would force any perfect-retraining algorithm to leak almost all of its dataset.

\begin{theorem}[informal]\label{thm:attackInformal}
There exists a task $T$ such that
\begin{enumerate}
    \item[(1)] There exists an $\left(o(1),o(1)\right)$-differentially private algorithm $\AAA$ such that for any dataset $D$ we have that $\AAA(D)$ solves $T$ w.r.t.\ $D$ with high probability.

    \item[(2)]
    Perfect retraining w.r.t.\ $T$ allows an attacker controlling merely $\omega(1)$ points to reconstruct almost all of the dataset. 
    Specifically, there is an attacker $\BBB$ and a set of points $B=\{b_1,\dots,b_r\}$ of size $|B|=\omega(1)$ such that the following holds. ($B$ is the set of points controlled by $\BBB$.) For any dataset $D$ containing $B$, and any algorithm $\AAA$ for $T$, given $\left(\AAA(D),\AAA(D\setminus\{b_1\}),\AAA(D\setminus\{b_1,b_2\}),\dots,\AAA(D\setminus B)\right)$, the attacker $\BBB$ can reconstruct almost all of $D$.
\end{enumerate}
\end{theorem}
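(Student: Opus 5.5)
We will design the task $T$ ourselves, so the plan is to build a task whose correctness condition is loose for large datasets and becomes very demanding once the dataset is "marked" by the attacker's points in a particular pattern. Concretely, let the domain be $X\times\{0,1\}^{*}$-style records, and designate a family of special points $b_1,\dots,b_r$ with $r=\omega(1)$ (say $r=\log\log n$ or any slowly growing function). A dataset $D$ determines an integer $k(D)=|\{i: b_i\in D\}|$. The task $T$ asks the output to encode a "sketch" of $D\setminus B$ whose required fidelity depends on $k(D)$. We will take a random-projection / inner-product style statistic: for $k(D)=k$, a valid output is a vector of $m_k$ answers to fixed predicate counting queries $q_1,\dots,q_{m_k}$ on $D$, each answered within additive error $\alpha_k$. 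We choose the schedule so that at $k=r$ (the initial dataset, containing all of $B$) only few queries and large error are needed, while after all the deletions the requirements suffice for reconstruction.

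For part (1), note that any fixed DP algorithm must be correct on every dataset, including those with small $k$. So the intended trick is instead to make the demanding requirement apply only when it is cheap privately for the relevant datasets: we let the number of queries grow like $m_k = 2^{r-k}$ times a base, while the allowed error shrinks only moderately, and we pick parameters so that for every $k$ answering $m_k$ queries with error $\alpha_k$ is achievable with $(o(1),o(1))$-DP via the Gaussian mechanism with advanced composition, i.e.\ $\alpha_k \approx \sqrt{m_k}\,\mathrm{polylog}/(\eps n)$. Since $m_k\le 2^r\cdot\mathrm{base}$, we can let $\eps=o(1)$ slowly and still be fine. The algorithm $\AAA$ simply reads $k(D)$ — but $k(D)$ itself is sensitive, so $\AAA$ answers the union of all query sets over all $k$ at the error demanded by the strictest one that it can afford; equivalently we will set requirements so that the maximal set of $n\cdot\mathrm{polylog}$ queries at error $\sqrt{n}\,\mathrm{polylog}/ (\eps n)$ is never required, and the subtlety is made to disappear by a different lever: the error tolerance. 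The cleaner design I would commit to: DP mechanisms can answer $m$ queries with error $\approx\sqrt{m}/(\eps n)$ (per query on normalized counts), and reconstruction (Dinur–Nissim) needs $\approx n$ queries with error $o(1/\sqrt{n})$. A single output cannot both be DP and meet the latter, so the task must permit, for each $k$, a loose answer; reconstruction then comes from combining the $r+1$ outputs, each of which is individually loose but on different, fresh query families (family $Q_k$ used when $k(D)=k$). Each output is DP-achievable; their concatenation gives $\sum_k |Q_k|$ answers.

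For part (2), the attacker $\BBB$ collects $\AAA(D\setminus\{b_1,\dots,b_j\})$ for $j=0,\dots,r$; by correctness of any algorithm for $T$ each gives accurate answers to $Q_{r-j}$ on $D\setminus B$ (shifted by the known contribution of the remaining $b_i$). Taking $|Q_k|$ and error $\alpha$ such that with $r=\omega(1)$ rounds the combined system contains $\Omega(n)$ random subset-sum queries with error $o(\sqrt n)$ in count units—e.g.\ each family has $n/r$ queries with error $\sqrt{n/r}\cdot\sqrt{r}\cdots$ tuned so each is DP with $\eps=O(1/\sqrt r)=o(1)$—we invoke the Dinur–Nissim LP decoding to recover all but $o(n)$ entries. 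The main obstacle is the parameter balancing: each single output must satisfy $(o(1),o(1))$-DP (answering $n/r$ queries at error $E$ requires $E\gtrsim\sqrt{n/r}/\eps$) while the union must beat the reconstruction threshold ($\sum$ queries $\approx n$, error $o(\sqrt n)$); the $\omega(1)$ number of deletions is exactly the slack making $\eps=\Theta(1/\sqrt{r})\cdot$(polylog) $=o(1)$ compatible with error $o(\sqrt n)$, and I would verify this via the standard accuracy-vs-queries bound of the Gaussian mechanism.
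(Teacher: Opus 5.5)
There is a genuine gap in part (1). Your task pins the required family to the \emph{exact} count $k(D)$ of controlled points present, and each deletion changes that count by one. You note that $k(D)$ is sensitive, but you never give a private way to decide which family to answer. Answering the union of all families does not help: that is $\approx n$ queries, and DP forbids answering them with error $o(\sqrt n)$, which is the very reconstruction result you rely on. ``The error tolerance'' is not a concrete fix.

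In fact no $(o(1),o(1))$-DP algorithm solves your task with high probability. Take neighbors $D$ and $D\setminus\{b_k\}$ on which the correct answer vectors for $Q_k$ and $Q_{k-1}$ are more than twice the allowed error apart; this is the typical case for reconstruction-grade queries, whose answers fluctuate by $\Theta(\sqrt n)$. The acceptable output sets $G,G'$ are then disjoint, so
\[
1-\beta\le\Pr[\AAA(D)\in G]\le e^{\eps}\Pr[\AAA(D\setminus\{b_k\})\in G]+\delta\le e^{\eps}\beta+\delta,
\]
a contradiction.

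The missing idea is slack in the index, which the paper builds into the task itself. With a slack parameter $k$ (unrelated to your $k(D)$), the batch may be any $i$ with $\lceil({\rm Star}(T)-k)/3k\rceil\le i\le\lfloor({\rm Star}(T)+k)/3k\rfloor$. The DP algorithm picks $i$ from a Laplace-noised count of $\star$'s, which is valid whenever the noise is at most $k$; this holds with probability $1-\beta/2$ once $k\ge\frac{2}{\eps}\log\frac{2}{\beta}$. It then answers only that batch of $t=o(n)$ queries with the Gaussian mechanism. The attacker pays for this slack by holding $3k$ copies of $\star$ per batch and deleting them in blocks of $3k$. At every observed step ${\rm Star}(T)$ is then a multiple of $3k$, so the batch is forced. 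The attacker thus controls $3kn/t$ points, not one per family. This is still $\omega(1)$, but the $\Omega(1/\eps)$ factor per batch is unavoidable, and your one-point-per-family design lacks it.

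Your parameter balancing is also off. With $|Q_k|=n/r$ and $\eps=\Theta(1/\sqrt r)$, the Gaussian mechanism's per-query error is about $\sqrt{(n/r)\log(1/\delta)}/\eps=\Theta(\sqrt{n\log(1/\delta)})$. That is not $o(\sqrt n)$, so it does not beat the reconstruction threshold. You need $\eps=\omega(\sqrt{\log(1/\delta)/r})$, e.g.\ $\eps=r^{-1/4}$ with $\log(1/\delta)=o(\sqrt r)$; this matches the paper's condition $t=o(\eps^2 n/\log(1/\delta))$.

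Finally, the query families are hard-coded into the task, so you need a fixed family that works for every $D$. The paper uses the $n$ deterministic queries of Dwork--Yekhanin with an average-squared-error guarantee. Random subset queries with Dinur--Nissim would need an extra probabilistic-method argument, plus a union bound over the $r+1$ outputs if accuracy is only guaranteed per output.
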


By item (1), without deletions, the task $T$ can be solved while guaranteeing strong privacy protections for the input dataset. Nevertheless, by item (2), an attacker that controls merely $\omega(1)$ points and can request their deletion from the dataset can reconstruct almost all of the dataset (reconstruct $\approx n$ points by deleting $\omega(1)$ points). As we mentioned, many existing definitions for machine unlearning are satisfied by perfect retraining. Theorem~\ref{thm:attackInformal} shows that these definitions do not rule out mechanisms that leak their entire dataset after just $\omega(1)$ deletion requests.

\begin{remark}
Item~(1) of Theorem~\ref{thm:attackInformal} is stated with  $(\eps,\delta)$-DP for $\delta=o(1)$. This is not standard, but still prevents the attacker from reconstructing, say, 1\% of the rows in $D$. The formal details are given in Appendix~\ref{sec:weakAttacker}. In particular, this can be restated with arbitrary  $(\eps,\delta)$ parameters, at the cost requiring the attacker described in Item~(2) to control $\omega\left(\frac{\log(1/\delta)}{\eps^2}\right)$ points. 
\end{remark}

\paragraph{A new definition for deletion safety.} Motivated by these vulnerabilities, we propose a new framework for machine unlearning. Unlike prior definitions that focus on removing the influence of the deleted point to mimic a world where it never existed, our definition focuses on protecting the privacy of the remaining, {\em undeleted} elements. Informally, we require that an attacker who observes an initial model and submits a sequence of deletion requests gains no additional information about the remaining dataset beyond what is inherent in the initial model and the deleted values themselves. We formalize this via a simulation-based definition, and provide variants handling non-adaptive and adaptive adversaries, as well as a relaxation allowing for bounded leakage.

\paragraph{Examples satisfying our new definition.} We show that our new definition is satisfied by stateless algorithms. This allows us to show that the exact sum and the bulletin board functionalities, which were disallowed by previous rigorous definitions, are compatible with our definition.
Furthermore, we provide examples for mechanisms, which were studied by prior work under the umbrella of perfect retraining, that are nevertheless compatible with our new definition. In other words, even though our work highlights security vulnerabilities of perfect retraining as a security notion, this does not necessarily mean that existing algorithms in the literature are bad; indeed, some of them could be justified using our new security notion. 

\subsection{A survey of prior definitions}\label{sec:otherRelated}

\paragraph{Definitions that are satisfied by perfect retraining.} 
Our attacks apply to all definitions which are satisfied by perfect retraining. 
This includes all the definitions in the literature in which perfect retraining is the ``stated goal''/``optimum''~\cite{cao2015towards,ginart2019making,guo2019certified,sekhari2021remember,gupta2021adaptive}. It also includes definitions where retraining from scratch is sufficient even if not the stated goal.

One such definition, called {\em deletion-as-control}, was introduced by \citet{cohen2023control}. 
Recall that we consider a data curator $\CCC$ that holds an initial dataset $D$ and sequentially receives deletion requests for elements from $D$ (and updates its responses accordingly). 
Informally, deletion-as-control requires that the internal state of the curator $\CCC$ in the real-world (where, say, Alice contributes her information and then asks to delete it) would be distributed similarly to its state in a ``hypothetical world'' where Alice does not contribute her data in the first place, but all other parties (except the curator) behave as if she did. Note that this definition is satisfied by perfect retraining because in perfect retraining the curator need not have a state besides the dataset, which is identical in the two executions (after the deletion).

Another notable definition by \citet{godin2021deletion} requires that the internal state of the curator $\CCC$ after Alice deletes her data should be simulatable from the view of all other parties (except Alice and the curator). 
Note that perfect retraining satisfies this definition, because the curator does not have a state besides the dataset, which consists of information jointly held by the parties excluding Alice. 

\paragraph{Definitions that are too restrictive.}
Not all definitions are satisfied by perfect retraining; against these our attacks do not apply.
As we argue next, these definitions can be too restrictive in some settings.
In particular, all the definitions we discuss next rule out simple and important functionalities including exact sums and bulletin board~\cite{cohen2023control}.
As a concrete example to have in mind, a bulletin board should allow users to publicly post messages and to later remove them.

Perhaps the most influential definition in this context, called {\em deletion-as-compliance}, was introduced by \citet{garg2020formalizing}. Suppose that Alice contributes and later deletes her data by interacting with the curator $\CCC$.
Informally, the requirement in the definition of \citet{garg2020formalizing} is that both the internal state of the curator $\CCC$ as well as the {\em view} of all other parties (except the curator and Alice) should be distributed similarly to the way they would have been distributed in an ideal world in which Alice never contributed her data in the first place. This means, in particular, that the view of these other parties (capturing everything they see throughout the execution) cannot be affected by Alice. 
Bulletin boards and exact sums are impossible: the view of other parties is inherently affected by Alice's contribution.

This issue was addressed by \citet{gao2022deletion} who  presented a weaker variant of the definition of \citet{garg2020formalizing} with the goal of being more permissive. Informally, the definition of \citet{gao2022deletion} is as follows. Suppose that Alice contributes {\em two} records and later deletes  {\em one} of them (by interacting with the curator $\CCC$). The requirement is that all other parties together should not be able to distinguish (based on their view of the execution) whether Alice deleted the first or the second record. This definition is indeed more permissive and, in particular, permits the {\em fact that a deletion has occurred} to be visible (but the identify of the deleted record needs to be kept hidden). However, this definition still rules out the  exact sum and the bulletin board functionalities for the same reasons as above.

Finally, \citet{chourasia2023forget} studied privacy risks that may arise when the adversary that selects the sequence of deletion requests is {\em adaptive} to the prior outputs of the curator.\footnote{This work extends the work of \citet{gupta2021adaptive} that we mentioned above. \citet{gupta2021adaptive} also studied adaptivity in machine unlearning, tackling the concern that algorithms that aim to mimic perfect retraining efficiently might lose their guarantees when the sequence of deletion is selected adaptively.} %
Their work was motivated by observation that if the sequence of deletions is adaptive, then this sequence itself %
might continue to reveal information about a record even after this record was deleted from the system. 
Informally, their definition requires that for every adaptive adversary $Q$ (that adaptively specifies deletion requests), every initial dataset $D$, every record $y\in D$, and every time step $i$, the output of the curator after receiving $i-1$ deletion requests from $Q$ and then a request to delete $y$ is simulatable from $D\setminus\{y\}$.\footnote{More specifically, their definition requires that for every adaptive adversary $Q$ and every time step $i$ there exists a simulator $\pi^{Q}_i$ such that the following holds. For every initial dataset $D$ and every record $y\in D$, the output distribution of the curator at time $i$, after receiving $i-1$ deletion requests (chosen adaptively by $Q$) and then receiving a request to delete $y$, is similar to the outcome distribution of the simulator $\pi^Q_i\left(D\setminus\{y\}\right)$.} 
Notice that the simulator of \citet{chourasia2023forget} is given all the data except for a single record, including every undeleted record. This if very much at odds with the ethos of our goal of privacy for those who remain.

Still, as \citet{chourasia2023forget} prove, the fact that this must hold for any {\em adaptive} adversary $Q$ forces the curator to provide some form of privacy protections to the records in $D$. 
(Roughly, the functionality can be non-private for at most a single time step.) Thus it disallows simple functionalities such as the exact sum or the  bulletin board. 
To see this for the bulletin board functionality, consider an adaptive $Q$ that behaves as follows: 
If Alice's public message mentions Bob, then $Q$ will request to delete Alice then Bob. Otherwise, $Q$ will request to delete Alice then Charlie. Since the simulator is executed without Alice's record, it cannot simulate which record to delete next.

Note that the definition of \citet{chourasia2023forget} argues about every single time step separately. In Section~\ref{sec:GenDifAttacks}, we show that such definitions are vulnerable to a type of attack we call a {\em generalized differencing attack}.

\section{Attacks}\label{sec:attacks}

In this section, we present several attacks in which an attacker controlling some of the records in a dataset can use the ability to delete these records in order to learn information about {\em other} records in the dataset.

\subsection{Reconstruction attacks}
Let us reconsider the median example, with the goal of releasing an {\em approximate} median rather than the exact median. Specifically, consider an algorithm that given a dataset $D$ returns a differentially private (DP) approximation for the median. This breaks the attack described in Section~\ref{sec:ourCont}, as it prevents the attacker from directly learning information about the remaining points from their DP median. Still, by deleting more and more points the attacker gets to see more and more releases (approximate medians), which together might allow it to infer information about the remaining points. If we were to ensure that {\em all releases together} satisfy DP, as in the {\em continual observation model} of \citet{DworkNPR10}, then this would provably prevent such an attack. However, this can come at a high price in terms of utility. Specifically, \citet{JainRSS23} presented problems that can be solved efficiently with differential privacy in the {\em single shot} setting (where only one output is released), but not in the {\em continual} setting, where the input dataset evolves over time and DP must be preserved across all outputs together. %

We now briefly describe a variant of one of the attacks of \citet{JainRSS23}, tailored to our setting. Essentially, the only conceptual modification from the attacks presented by \citet{JainRSS23} is that we highlight the fact that it is not just that DP cannot be maintained in this continual setting, but it is in fact the privacy of ``undeleted'' points that get compromised.\footnote{This also holds in the attacks of \citet{JainRSS23}, but not explicit as this was not their focus; they aimed to show that the error must be large as a function of the size of the evolving dataset (which includes both the deleted and undeleted elements in our context).} This will not yet result in the attack described in Theorem~\ref{thm:attackInformal} as the number of users that the attacker needs to control will be relatively large.
Consider the following problem: Given a multiset $T\subseteq[n]$, the goal is to return an approximation to
$$
{\rm CountMod}(T)\triangleq\sum_{i\in [n]} \1\Big\{ {\rm count}_T (i) \; {\rm mod}\;3=2\Big\},
$$
where ${\rm count}_T (i)$ is the number of occurrences of $i$ in $T$.
Note that ${\rm CountMod}$ is a function of sensitivity 1, and hence could be approximated with DP (in the single shot setting) with error roughly $1/\eps$.

Now consider a dataset (multiset) $T=D\cup S\subseteq[n]$, where $D$ is a random subset of $[n]$ of size $n/2$, and where $S$ is the multiset of size $3n^2$ containing $3n$ copies of every $i\in[n]$. We consider an attacker that ``controls'' $S$ (in the sense that it can request the deletion of points from $S$) and aims to learn the dataset $D$. Now observe that by controlling $S$, the attacker can force any mechanism for ${\rm CountMod}$ to provide it with answers to {\em counting queries} w.r.t.\ $D$. In more detail, a {\em counting query} is defined via a predicate $q:[n]\rightarrow\{0,1\}$, and the value of such a query on $D$ is $q(D)\triangleq\sum_{i\in D}q(i)$. Given such a counting query $q$, the attacker acts as follows:
\begin{enumerate}
    \item\label{step:queryInit} For every $i\in[n]$ such that $q(i)=1$, request the deletion of 2 copies of $i$ from $S$ (and thus from $T$).
    \item Obtain a release $a_q\approx {\rm CountMod}(T)$. Note that ${\rm CountMod}(T)=q(D)$, because, by Step~\ref{step:queryInit}, for every $i\in[n]$ such that $q(i)=0$ we have that ${\rm count}_T(i) \; {\rm mod}\; 3\neq2$, and for every $i\in[n]$ such that $q(i)=1$ we have that ${\rm count}_T(i) \; {\rm mod}\; 3=2$ if and only if $i\in D$.
    \item For every $i\in[n]$ such that $q(i)=1$, request the deletion of one copy of $i$ from $S$ (and thus from $T$). Note that this ``resets'' the modifications we did in Step~\ref{step:queryInit} in the sense that for every $i$, its number of occurrences in $S$ is a multiple of 3.
\end{enumerate}
This allows the attacker to obtain answers to (at least) $n$ arbitrary counting queries w.r.t.\ $D$. This is known to allow the attacker to reconstruct $D$ unless the approximation error is $\Omega(\sqrt{n})$ by the results of \cite{dwork2008new}.

\paragraph{Reducing the number of deletions.} 
In the last attack, the attacker controlled $O(n^2)$ points and were able to reconstruct $\approx n$ ``undeleted'' points. By slightly modifying the definition of the ${\rm CountMod}$ problem, it can be shown that controlling $O(n)$ points also suffices. The idea is that in the construction of \cite{dwork2008new}, the $n$ counting queries are fixed in advance (Fourier queries) and so we could ``hard-code'' them into the definition of the problem. Let these queries be denoted as $q_1,\dots,q_n$, and let $k$ be a parameter (think about $k\approx\frac{1}{\eps}$).  
Now let $T$ be a dataset (multiset) containing elements from $[n]\cup\{\star\}$, and consider the following task. Let ${\rm Star}(T)$ denote the number of $\star$'s in $T$. The goal is to approximate the value of $q_{i}$ on $D=\{x\in T : x\neq \star\}$ for an integer $i$ satisfying 
\begin{equation}\label{eq:validIndex}
\left\lfloor \frac{{\rm Star}(T)-k}{3k} \right\rceil \leq i \leq \left\lfloor \frac{{\rm Star}(T)+k}{3k} \right\rceil.
\end{equation}
Note that this definition of the problem does not require the algorithm to compute ${\rm Star}(T)$ exactly; it suffices to estimate it up to an error of $k$, which can be done privately provided that $k\approx\frac{1}{\eps}$. Also note that when ${\rm Star}(T)$ is a multiply of $3k$, say ${\rm Star}(T)=3kr$, then there is exactly one ``allowed'' index $i$, namely $i=r$.

Now consider a dataset (multiset) $T=D\cup S\subseteq[n]\cup\{\star\}$, where $D$ is a random subset of $[n]$ of size $n/2$ and where $S$ is the multiset containing $3kn$ copies of $\star$. Now observe that by controlling $S$ and deleting its elements one by one, the attacker gets (approximate) answers to all of the queries $q_n,q_{n-1},\dots,q_1$ on the dataset $D$, and could hence reconstruct $D$ unless the error is $\Omega(\sqrt{n})$. So now the attacker controls $O(n)$ points and reconstructs $O(n)$ other (``undeleted'') points.

\paragraph{Further reducing the number of deletions.} 

We can further reduce the number of points that the attacker controls (and deletes) by modifying the task such that the algorithm is required to release approximate answers to {\em multiple} queries together, rather than just one. More specifically, as before, let $q_1,\dots,q_n$ denote a (fixed) set of $n$ counting queries, and let $k,t\in\N$ be parameters where $t$ divides $n$. Given a dataset (multiset) $T\subseteq[n]\cup\{\star\}$, the new goal is to approximate the value of each of $q_{(i-1)t+1},...,q_{it}$ on $D=\{x\in T : x\neq \star\}$ for an integer $i$ satisfying Inequality~(\ref{eq:validIndex}). 
As before, a valid value for $i$ can be computed in a differentially private manner, provided that $k\gtrsim\frac{1}{\eps}$. In addition, as long as $t=o(n)$, a (single shot) DP algorithm can release answers to $t$ queries with error $o(\sqrt{n})$. Thus, the problem can be solved with privacy in the single shot setting (without deletions). However, with deletions, it now suffices for the attacker to control merely $\omega(1)$ points. Specifically, consider a dataset (multiset) $T=D\cup S\subseteq[n]\cup\{\star\}$, where $D$ is a random subset of $[n]$ of size $n/2$ and where $S$ is the multiset containing $s_0=|S| =  3kn/t=\omega(1)$ copies of $\star$. Now observe that by controlling $S$ and deleting its elements one by one, the attacker gets (approximate) answers to all of the queries $q_n,q_{n-1},\dots,q_1$ on the dataset $D$, and could hence reconstruct $D$ unless the error is $\Omega(\sqrt{n})$. The upside here is that the attacker needs to control very few points ($\omega(1)$ points) and by deleting them the attacker can reconstruct a dataset of size $n$. This results in Theorem~\ref{thm:attackInformal}. The formal details are given in Appendix~\ref{sec:weakAttacker}.

\subsection{Generalized differencing attacks}\label{sec:GenDifAttacks}
As we mentioned, definitions that aim to mimic/approximate the behavior of perfect retraining are susceptible to ``differencing attacks'', such as the simple counting example, where seeing two consecutive releases (before and after a deletion) allows the attacker to learn the deleted point. We now present simple generalizations of this attack that extend to other types of definitions:
\begin{enumerate}
    \item Consider a learning algorithm $\AAA$ that takes a dataset $D$ containing $n$ numbers, represented in binary (say using $d$ bits). The algorithm ignores the data and returns a random binary vector of length $d$, denoted as $z$. Next, upon receiving a request to delete a point $x$, the algorithm deletes $x$ from the data and returns $z\oplus x$. This, of course, allows the same ``differencing attack''. However, note that now the marginal distribution of every single release given by the algorithm is simply uniform over $d$-bit vectors, independent of the data. This shows that definitions that only argue about every single time step individually, like the definition of \citet{chourasia2023forget} mentioned above, can be vulnerable to differencing attacks even if they require that the distribution of every release is independent of the data.\footnote{For this we need to assume that either the curator is allowed to hold a state (the vector $z$), or that the deletion algorithm receives an input both the point to be deleted (the point $x$) and the previous release (the vector $z$ in our case).}

    \item Somewhat artificially, a similar attack could be leveraged to reveal information about ``undeleted'' points. The attack begins the same, with the initial release being a random $d$-bit vector $z$. Next, upon receiving a request to delete a point $x$, the algorithm deletes $x$ from the data, samples a random point $y$ from the remaining data points, and returns $z\oplus y$.
\end{enumerate}

\section{Our new definition\raisebox{1px}{\small (}s\raisebox{1px}{\small )}}

Consider a setting where a curator wishes to compute a functionality $f$ on an initial dataset $D$ while supporting ongoing deletion requests. We present security definitions that specifically safeguard undeleted data against leakage caused by the deletion of other points. We conceptually separate our focus from the following two types of privacy risks: 

\begin{enumerate}
    \item Our security definition does not address privacy risks that are inherent to the functionality $f$ itself. This mirrors the paradigm of secure Multi-Party Computation (MPC) in cryptography: an MPC protocol guarantees that nothing is leaked beyond the output of the chosen function $f$, but whether evaluating $f$ is acceptable from a privacy standpoint is left as an external, system-level choice. We adopt a similar philosophy, accepting that the initial release of $f(D)$, before any deletion is made, may inherently reveal information about $D$. We instead wish to limit the information subsequently revealed by deleting elements, not taking for granted the leakage that would come from recomputing $f$ after each request.

    \item We separate the risks incurred to {\em deleted} points from the risks incurred to {\em undeleted} points. As we mentioned, the machine unlearning community is already well-aware that there are cases where, by deleting her data, Alice exposes herself to privacy risks (such as in a differencing attack). Nevertheless, Alice should have the right to make a calculated, informed decision to delete her data even in the face of such risks. 
\end{enumerate}

Our point is not that leakage via $f$ itself or w.r.t.\ deleted points is unimportant. Rather, we argue that even if we accept these two types of leakage, there are still other critical risks that need to be addressed. Specifically, our work focuses on the scenario where some users' deletions compromise the privacy of the remaining users who never opted to delete their data. Our goal is to formulate a security definition guaranteeing that Bob's privacy is not harmed by other people deleting their data, beyond what is inherently revealed about Bob's data from the initial release and the value of the deleted points (which the adversary may learn).

\subsection{Non-adaptive attacker}

We start with the simplest variant of our definition that handles {\em non-adaptive} attackers who decide on the sequence of deletions ahead of time. To simplify the presentation, we do not distinguish between the ``algorithm'' and the ``curator'' and consider one stateful algorithm $\AAA$ with the following syntax. Initially $\AAA$ is executed on a dataset $D$ and outputs an initial output $z_0\leftarrow\AAA(D)$. Then, for $k$ rounds $i=1,2,\dots,k$, algorithm $\AAA$ obtains a point $y_i\in D$ and returns a modified output $z_i\leftarrow\AAA(D,y_1,\dots,y_i)$. The mindset is that $z_i$ represents the result of ``deleting'' $y_i$ from $D$, but this is not part of the formal definition. 
We also write $\left(
\AAA(D) , \AAA(D,y_1) , \AAA(D,y_1,y_2) , \dots , \AAA(D,y_1,y_2,\dots,y_k)
\right)$ to denote the vector of outcomes obtained by running $\AAA$ on $D$ and then ``deleting'' one by one the points $y_1,\dots,y_k$.\footnote{We stress that algorithm $\AAA$ can have an internal state, which is omitted from this vector notation for simplicity.}

\begin{definition}[Non-adaptive variant]\label{def:oblivious}
Algorithm $\AAA$ is $k$-{\em undeleted-safe} if there is a simulator $\Sim$ such that for every dataset $D$ of size $n$, and every sequence $(y_1,\dots,y_{k'})$ containing $k'\leq k$ distinct points from $D$ it holds that
$$
\Sim(\AAA(D),y_1,\dots,y_{k'}) \approx 
\left(
\AAA(D) , \AAA(D,y_1) , \AAA(D,y_1,y_2) , \dots , \AAA(D,y_1,y_2,\dots,y_{k'})
\right).
$$
\end{definition}
In words, there is a simulator that takes only the initial output $\AAA(D)$ and the values of the deleted points $(y_1, \dots, y_{k'})$, and can simulate the entire sequence of updated outputs generated by $\AAA$. Intuitively, this formalizes the goal stated above: an attacker observing the entire sequence of unlearning updates learns nothing more about the remaining dataset than what they could already compute themselves from the initial release and the deleted values.

\begin{remark}
Definition~\ref{def:oblivious} can be instantiated with any notion of similarity between the distributions, such as statistical indistinguishability, computational indistinguishability, or the similarity notion used by differential privacy.
\end{remark}

\subsection{Static adaptive attacker}

We now extend the definition to the case where an attacker $\BBB$ can adaptively decide on the {\em order} of deletions. The attacker is static in the sense that the points it controls (and is allowed to delete) are predetermined. We assume that there are no other deletions in the game. In addition, the attacker might possess some side-information about the other points in the dataset $D$, which we denote as $\SideInfo(D)$. Formally, 
for a dataset $D=(x_1,\dots,x_n)$, a set $Y\subseteq [n]$ of size $k'\leq k$ denoting the indices of the points in $D$ controlled by $\BBB$, and a side-information function $\SideInfo$, we consider the following game between algorithm $\AAA$ and the attacker $\BBB$:

\begin{enumerate}
\item[]\hspace{-28px} $\underline{\boldsymbol{{\rm StaticGame}(\AAA,\,D{=}(x_1,\dots,x_n),\,\BBB,\,Y{\subseteq}[n],\,\SideInfo){:}}}$
    \item Let $z_0\leftarrow\AAA(D)$.
    \item The attacker $\BBB$ obtains the controlled points $D\big|_Y\triangleq\{(i,x_i) : i\in Y\}$, the side-information $\SideInfo(D)$, and the initial output $z_0$.
    \item For $\ell=1,2,\dots,|Y|$:
    \begin{enumerate}
        \item The attacker $\BBB$ chooses an index $j_{\ell}\in Y\setminus\{j_1,j_2,\dots,j_{\ell-1}\}$
        \item Algorithm $\AAA$ receives a deletion request for the point $x_{j_{\ell}}$ and returns an updated output $z_{\ell}$, which is given to $\BBB$.
    \end{enumerate}
    \item The attacker $\BBB$ outputs its view of the interaction, which includes its internal randomness, the indices of the controlled points $Y$, its side-information $\SideInfo(D)$, and all the outputs it received from $\AAA$ during the interaction.
\end{enumerate}

\begin{definition}[Static adaptive variant]\label{def:static}
Algorithm $\AAA$ is $k$-{\em undeleted-safe} if for every attacker $\BBB$ there is a simulator $\Sim$ such that for every side-information function $\SideInfo$, every dataset $D$ of size $n$, and every set of indices $Y\subseteq[n]$ of size $|Y|\leq k$ it holds that
$$
\Sim\left(\AAA(D),D\big|_Y,\SideInfo\right) \approx 
{\rm StaticGame}\left(\AAA,D,\BBB,Y,\SideInfo\right).
$$
\end{definition}

Note that any algorithm $\AAA$ that satisfies Definition~\ref{def:static} also satisfies Definition~\ref{def:oblivious}, i.e., Definition~\ref{def:static} is (perhaps) harder to obtain. This holds by taking $\BBB$ to be the attacker that fixes the ordering of the deletions from $Y$ before the interaction begins, and setting $\SideInfo$ to be empty.

\subsection{Dynamic adaptive attacker}

In the previous two definitions, the set of indices/points that the attacker controlled was fixed ahead of time. We now handle the case where the attacker can dynamically choose the indices to control and delete. So now $Y$, the set of indices controlled by the attacker, is not fixed in advance. It is initially empty, and throughout the execution, the attacker dynamically decides on indices from $[n]$ to ``corrupt''. Once corrupted, these indices are added to $Y$. 
Formally, given a dataset $D=(x_1,\dots,x_n)$ we consider the following game between a stateful algorithm $\AAA$ and an attacker $\hat{\BBB}$ with side-information function $\SideInfo$. This game has two modes, parameterized by the bit $b\in\{0,1\}$, where $b=0$ corresponds to a ``real world'' execution in which the attacker learns the sequence of modified outputs of $\AAA$ after every deletion, and where $b=1$ corresponds to an ``ideal world'' execution in which this information is hidden from the attacker.

\begin{enumerate}
\item[]\hspace{-28px} $\underline{\boldsymbol{{\rm DynamicGame}_b(\AAA,\,\hat{\BBB},\,D{=}(x_1,\dots,x_n),\,\SideInfo){:}}}$
    \item Let $\ell=0$ and $z_0\leftarrow\AAA(D)$.
    \item The attacker $\hat{\BBB}$ obtains $\SideInfo(D)$ and the initial output $z_0$.
    \item Initiate $Y=R=\emptyset$. \gray{\small \% Here $Y$ denotes the set of indices controlled by the attacker, and $R$ denotes the indices it had already deleted.}
    \item While $|R|<k$ and the attacker $\hat{\BBB}$ has not ended the loop: %
    \begin{enumerate}
    \item Set $\ell\leftarrow\ell+1$.
        \item The attacker $\hat{\BBB}$ chooses a set of indices $I_{\ell}\subseteq[n]$, possibly empty, under the restriction that $|Y\cup I_{\ell}|\leq k$, and learns $x_i$ for every $i\in I_{\ell}$. Set $Y\leftarrow Y\cup I_{\ell}$.
        \item The attacker $\hat{\BBB}$ chooses a set of indices $J_{\ell}\subseteq Y\setminus R$, possibly empty. Set $R\leftarrow R\cup J_{\ell}$.
        \item If $b=0$ and $J_{\ell}\neq\emptyset$ then $\AAA$ receives a deletion request for the points in $J_{\ell}$ and returns an updated output $z_{\ell}$, which is given to the attacker. If $b=1$, do nothing. %

        \item The attacker $\hat{\BBB}$ chooses whether to continue or to end the loop.
    \end{enumerate}
    \item The attacker $\hat{\BBB}$ outputs $v$.  \gray{\small \% In a ``real world'' execution, where $b=0$, this $v$ can be thought of as the view of the attacker $\hat{\BBB}$ at the end of the execution. However, in an ``ideal world'' execution, where $b=1$, we think of $\hat{\BBB}$ as a {\em simulator} that aims to simulate some other attacker, and so the value $v$ is not necessarily the view of $\hat{\BBB}$ itself, but rather a ``simulated view'' that should correspond to the view of the simulated attacker at the end of a real world execution.} 

    \item The output of the game is $(Y,v)$. \gray{\small \% We include $Y$ in the output of the game to force the simulator (when $b=1$) to choose its corruptions similarly to the real world attacker (when $b=0$). Otherwise, for example, the simulator could corrupt $k$ indices and use them to simulate a real-world attacker that only corrupts $k/2$ indices.} %
\end{enumerate}

\begin{definition}[Dynamic adaptive variant]\label{def:dynamic}
Algorithm $\AAA$ is $k$-{\em undeleted-safe} if for every attacker $\BBB$ there is a simulator $\Sim$ such that for every side-information function $\SideInfo$ and every dataset $D$ of size $n$ it holds that
$$
{\rm DynamicGame}_1(\AAA,\Sim,D,\SideInfo)
\approx 
{\rm DynamicGame}_0(\AAA,\BBB,D,\SideInfo).
$$
\end{definition}

Note that any algorithm $\AAA$ that satisfies Definition~\ref{def:dynamic} also satisfies Definition~\ref{def:static}, i.e., Definition~\ref{def:dynamic} is (perhaps) harder to obtain. This holds by taking $\BBB$ to be the attacker that fixes the set $Y$ before the interaction begins.

\section{Algorithms satisfying our definition and extensions}

It is immediate from Definition~\ref{def:dynamic} that it is satisfied by any {\em stateless} algorithm $\AAA$, where by {\em stateless} we mean that for any $D$ and deleted points $y_1,\dots,y_k$ it holds that  $\left(
\AAA(D) , \AAA(D,y_1) , \dots , \AAA(D,y_1,y_2,\dots,y_k)
\right)$ is strictly determined by  $\left(
\AAA(D) , y_1,\dots,y_k\right)$. This includes interesting functionalities like a bulletin board and exact summations. Note that $\AAA(D)$ does not have to be deterministic for this argument to go through. 
\begin{example}\label{ex:exactSum}
Consider the algorithm $\AAA$ that takes a dataset containing $n$ numbers from $[0,1]$ and returns a DP estimation for their summation. Then, upon receiving a deletion request for a point $x_i$, the algorithm simply returns $(z-x_i)$ where $z$ is the previously released sum. This algorithm is stateless, so it satisfies our new definition, and it provides DP w.r.t.\ undeleted points.    
\end{example}
Note that this task can also be solved in the continual observation model using the classical ``tree algorithm'' of \citet{DworkNPR10}, which would guarantee DP for both the deleted and the undeleted points. However, this provably requires error that scales with $\Omega(\log k)$ \citep{Cohen0NSS24}. The simple algorithm described in Example~\ref{ex:exactSum} shows that if we only want to provide privacy for undeleted points, then a single DP calculation (the initial noisy summation) suffices, resulting in error independent of $k$.

It is also immediate from our definition that some algorithms/functionalities do {\em not} satisfy it. For example, as we mentioned, the exact median allows the attacker to learn undeleted points, and thus does not satisfy our new definition.\footnote{To see this, consider a dataset $D$ containing $10$ copies of the point 0, one ``middle'' point $w$ which is either $w=1/4$ or $w=3/4$, and $12$ copies of the point 1. The exact median is 1. Suppose that the attacker controls one of the copies of $1$ (so $k=1$ in this example). By deleting this controlled point, the exact median becomes $w$, which is then revealed to the attacker. But without deletions the simulator does not get any information about $w$ and hence could not conduct the simulation.} 
So suppose that we are interested in computing such function $f$ that is not directly compatible with our framework. The question then becomes: {\em ``how much additional information is needed, beyond $f$, in order for the simulation to become possible?''}. For example, observe that if $\SideInfo$ contains all of $D$ (the side-information available to the simulator and the attacker), then simulating the exact median, or any other functionality, becomes trivial. We aim to quantify this ``additional leakage'' that suffices for simulation, and minimize it. In particular, in some cases, it might be reasonable to allow for additional leakage that satisfies DP. We can argue about the additional leakage via two primary approaches:
\begin{enumerate}
    \item[(1)] We could explicitly give the simulator some additional leakage function $g$ of the dataset $D$, in order for the simulation to become possible. This has the benefit of making the leakage explicit; hopefully allowing us to argue about it more easily and try to minimize it.
    
    \item[(2)] Instead (or maybe in addition) to considering such a leakage function $g$, it might also be useful to replace $f$ with a {\em different} function $\hat{f}$ that approximates it in some sense while being more suited to our definitions.\footnote{This is common practice in the literature on differential privacy: When designing private analogous to non-private computations, we typically compute noisy/approximate variants of the original functionality.} Computing $\hat{f}$ instead of $f$ might require less, or no, additional leakage (in the form of $g$). However, this might introduce another type of leakage because in our definition we do not protect against leakage caused by the initial computation, which would now be according to $\hat{f}$ instead of $f$.
\end{enumerate}

We formulate option (1) in the following definition.\footnote{For concreteness, we state this variant of the definition only w.r.t.\ Definition~\ref{def:dynamic}. The same can be said w.r.t.\ Definitions~\ref{def:static} and~\ref{def:oblivious}.}

\begin{definition}[Dynamic adaptive variant, with leakage]\label{def:dynamicLeakage}
Let $g$ be a leakage function. Algorithm $\AAA$ is $(k,g)$-{\em undeleted-safe with leakage} if for every attacker $\BBB$ there is a simulator $\Sim(\cdot)$ that takes an additional input and then plays in ${\rm DynamicGame}_1$ such that the following holds. For every side-information function $\SideInfo$ and every dataset $D$ of size $n$ it holds that
$$
{\rm DynamicGame}_1(\AAA,\Sim(g(D)),D,\SideInfo)
\approx 
{\rm DynamicGame}_0(\AAA,\BBB,D,\SideInfo).
$$
\end{definition}

Note that any algorithm $\AAA$ that is $k$-undeleted-safe according to Definition~\ref{def:dynamic} is also $(k,g)$-undeleted-safe according to Definition~\ref{def:dynamicLeakage} with $g\equiv\bot$. On the other extreme, for $g(D)=D$ Definition~\ref{def:dynamicLeakage} becomes trivial in the sense that {\em any} algorithm $\AAA$ would be $(k,g)$-undeleted-safe (this holds because if the simulator knows $D$ then it can simulate all of the execution of ${\rm DynamicGame}_0$). When $g$ is ``bounded'', say a DP function, then the definition might be considered reasonable.

\subsection{Approximate median}

Suppose that we are interested in an algorithm that initially takes a dataset $D$ containing $n$ elements from a domain $X$ and releases an approximation $z_0$ to the median of $D$. Then, for $k$ rounds, the algorithm receives the $i$-th deletion request and outputs a modified answer $z_i$. We evaluate the approximation of the median for the (remaining) data at any given moment using its {\em rank error}. That is, if the current dataset is $D_i$ and the algorithm returns an answer $z_i$, then the error of $z_i$ w.r.t.\ $D_i$ is defined as
$$
\error_{D_i}(z_i):=\left|\{ x\in D_i : x \text{ is strictly between } z_i \text{ and } \mbox{median}(D_i) \}\right|.
$$
We aim to design an algorithm for approximating the median, while satisfying our security definition, that guarantee small worst-case error across all their $(k+1)$ releases. 
Naively, one could solve this task by first computing and releasing the exact median, and then ignoring all deletion requests. This stateless algorithm satisfies our security definition and guarantees error at most $k$ deterministically. 
Alternatively, we could extend Example~\ref{ex:exactSum} to this setting as follows.

\begin{example}\label{ex:med}
Consider the algorithm $\AAA$ that takes a dataset $D$ containing $n$ points from some ordered domain $X$ and operates as follows. Initially, $\AAA$ computes a differentially private data structure $P$ that provides approximate answers to range queries w.r.t.\ $D$. Specifically, for every $a\leq b\in X$ it ensures that $P(a,b)\approx|\{x\in D : a\leq x\leq b\}|$.\footnote{There are several such constructions in the literature, e.g., by \citet{BeimelNS13,BunNSV15,KaplanLMNS20,Cohen0NSS23}.} Note that this, in particular, allows us to compute an approximate median of the dataset $D$. Now, throughout the execution, we simply subtract deleted elements from the approximate counts given by $P$. Specifically, at any moment throughout the execution, we estimate the number of (remaining) elements in a range $[a,b]$ as $P(a,b)-\#_{\rm del}(a,b)$, where $\#_{\rm del}(a,b)$ denotes the (exact) number of deleted elements between $a$ and $b$. As in Example~\ref{ex:exactSum}, this provides DP w.r.t.\ undeleted points, without increasing the error introduced by $P$.
\end{example}

\subsection{A recipe for undeleted-safe algorithms}

Example~\ref{ex:med} suggests the following informal recipe for fitting a functionality $f$ to our framework:
\begin{enumerate}
    \item[(1)] Identify a set of sufficient statistics for the function $f$.
    \item[(2)] Estimate these statistics with differential privacy.
    \item[(3)] Throughout the execution, maintain these statistics as a function of the deleted points.
\end{enumerate}

Of course, this recipe is only useful when the set of sufficient statistics we identify is compatible with differential privacy. It turns out that this is indeed the case for some functionalities that were studied by prior work on machine unlearning. 

In particular, \citet{cao2015towards} presented the following ``SQ framework'' for machine unlearning: Let $f$ be any function which can be computed in the non-adaptive {\em statistical query (SQ)} model. That is, there exists a collection of predicates $C$ such that for any dataset $D$, the value of $f(D)$ can be computed by post-processing $\{\sum_{x\in D}c(x)\}_{c\in C}$, i.e., the sum of each of the predicates in $C$ on the elements in $D$. As \citet{cao2015towards} showed, tracking $\{\sum_{x\in D}c(x)\}_{c\in C}$ throughout the execution allows us to maintain the current value of $f$ without recomputing it from scratch. When $|C|$ is relatively small, this results in an efficient algorithm for perfect retraining. This SQ framework is particularly suited for our definition, as computing the initial values of $\{\sum_{x\in D}c(x)\}_{c\in C}$ can be done in a DP manner (again, assuming that $|C|$ is small). Afterwards, as in Examples~\ref{ex:exactSum} and~\ref{ex:med}, we could update these values as a function of the deleted points alone.

\bibliographystyle{plainnat}

\appendix

\section{$k$-means clustering on the line}
\label{sec:kMeansClustering}

Consider a perfect retraining algorithm for $k$-means that, upon deletion, simply (re)runs Lloyd’s algorithm \citep{lloyd1982least} on the remaining data points.\footnote{ 
Recall that Lloyd's algorithm for $k$-means is an iterative refinement procedure that alternates between assigning data points to the nearest cluster center and updating the cluster centers to be the mean of their assigned points. These iterative refinement procedure continues until convergence (when assignments no longer change).}
Furthermore, for the sake of this example, let us consider the simple case where $k=2$ and the data points are on the real line. 

We consider the following experiment. Initially, a dataset $D$ containing $n$ points is sampled from some fixed (known) distribution $\DDD$. Then, $\alpha n$ random points from the dataset $D$ are given to an attacker $\AAA$. We refer to these points as the points that the attacker {\em controls}, and denote them as $A\subseteq D$. The game now continues in iteration where in every iteration:
\begin{enumerate}
    \item The attacker $\AAA$ picks a point $a\in A\cap D$ and requests it to be deleted from $D$.
    \item We remove $a$ from $D$, rerun Lloyd's algorithm on $D$ to obtain a new tuple $(c'_1,c'_2)$ and give this tuple to $\AAA$.
\end{enumerate}

The goal of the attacker is to identify points which it does not control, i.e., discover points from $D\setminus A$. We show empirically that this can be done successfully for several choices for the underlying distribution $\DDD$, including the uniform distribution over an interval, and a mixture of two normal distributions. 

The intuition behind the attack is as follows. Suppose that the attacker knows, in addition to the current cluster centers $(c_1,c_2)$ also knew the {\em sizes} of the clusters $(m_1,m_2)$, where $m_1+m_2=|D|$. By deleting a point from $A$ which is larger than $c_2$, the attacker causes the right cluster to shift slightly to the left, thereby potentially causing points from the left cluster to ``jump'' to the right cluster. If it happens that exactly one point jumps between the clusters, then the attacker learns this point exactly, as it knows the previous and new averages (i.e., centers) of the left cluster (and we assumed that the attacker knows the cluster sizes). We show empirically that this indeed happens several times throughout the execution.\footnote{If more than one point jumps between the clusters, say $r$ points, then the attacker could heuristically delete a point from the other side of the line, thereby causing $j$ points to potentially jump in the other direction. The attack now works if $j\in\{1,r-1,r+1\}$ since by looking at two consecutive jumps of $r,r+1$ points the attacker could isolate a single point.}

It remains, therefore, to explain how the attacker can learn the sizes of the clusters from the cluster centers. Suppose that by deleting a point $a^*$ from the right side of the line, the attacker caused both centers to change, from $(c_1^{\rm old},c_2^{\rm old})$ to $(c_1^{\rm new},c_2^{\rm new})$. Let $m_1^{\rm old},m_2^{\rm old},m_1^{\rm new},m_2^{\rm new}$ denote the sizes of the corresponding clusters before and after the change. Let $P$ denote the sum of the points that jumped from the left to the right cluster and let $p=m_1^{\rm old}-m_1^{\rm new}=m_2^{\rm new}-m_2^{\rm old}+1$ denote the number jumping points, where the $(+1)$ is because of the deleted point $a^*$. Also let $n=m_1^{\rm old}+m_2^{\rm old}=m_1^{\rm new}+m_2^{\rm new}+1$ denote the size of the dataset before the deletion, which is known to the attacker. We have that
$$
P=m_1^{\rm old}\cdot c_1^{\rm old} - m_1^{\rm new}\cdot c_1^{\rm new}
=m_1^{\rm old}\cdot c_1^{\rm old} - \left(m_1^{\rm old}-p\right)\cdot c_1^{\rm new}.
$$
Now, the sum of the points in the right cluster, after the change, can be written in two ways: Either 
\begin{equation}\label{eq:twoways1}
m_2^{\rm new}\cdot c_2^{\rm new}=\left(m_2^{\rm old} -1 + p\right)\cdot c_2^{\rm new}=\left(n-m_1^{\rm old} -1 + p\right)\cdot c_2^{\rm new},
\end{equation}
or it can be written as
\begin{equation}\label{eq:twoways2}
m_2^{\rm old}\cdot c_2^{\rm old} + P - a^*
= \left(n-m_1^{\rm old}\right)\cdot c_2^{\rm old} + m_1^{\rm old}\cdot c_1^{\rm old} - \left(m_1^{\rm old}-p\right)\cdot c_1^{\rm new}-a^*.
\end{equation}
Equating (\ref{eq:twoways1}) and (\ref{eq:twoways2}) we get one equation with two unknown variables, $m_1^{\rm old}$ and $p$. 
Isolating $p$ from this equation allows us to express $p$ as a function of $m_1^{\rm old}$ as follows:
\begin{align*}
p
&=\frac{m_1^{\rm old} \cdot\left(c_1^{\rm old}-c_1^{\rm new}-c_2^{\rm old}+c_2^{\rm new}
\right)   
+n\cdot(c_2^{\rm old} - c_2^{\rm new})
+ c_2^{\rm new}  
-a^* }{c_2^{\rm new}-c_1^{\rm new}}
\end{align*}
Now recall that both $p$ and $m_1^{\rm old}$ must be integers. Heuristically, by trying different values for $m_1^{\rm old}$ until the resulting $p$ is an integer, we succeed in fining the correct values for $p$ and $m_1^{\rm old}$, and thus learn the sizes of the clusters exactly. The code for this simulation is available on \href{https://github.com/refael-kohen/k_means_attack}{GitHub}.

\section{An attacker controlling ${\boldsymbol{\omega(1)}}$ points}\label{sec:weakAttacker}

In this section we give the formal details for an attack scenario in which the attacker controls (and deletes) merely $\omega(1)$ points and is still able to reconstruct $\approx n$ ``uncontrolled'' (or ``undeleted'') points. 

\begin{definition}[The batch queries (BQ) problem]\label{def:BQ}
Let $n,k,t\in\N$ and $\alpha,\beta\in[0,1]$ be parameters, and let $Q=(q_1,\dots,q_n)$ be a collection of $n$ counting queries over $[n]$. That is, every $q$ is a predicate $q:[n]\rightarrow\{0,1\}$.
Let $T\subseteq[n]\cup\{\star\}$ be a multiset and let $\vec{a}\in\R^t$ be a real valued vector. 
We say that $\vec{a}$ is $\alpha$-accurate for $T$ if there exists an integer $i$ such that
$$
\left\lceil \frac{{\rm Star}(T)-k}{3k} \right\rceil \leq i \leq \left\lfloor \frac{{\rm Star}(T)+k}{3k} \right\rfloor,
$$
and
$$
\frac{1}{t}\left\|\vec{a} - \left(q_{(i-1)t+1}(T\setminus\{\star\}),...,q_{it}(T\setminus\{\star\})\right)\right\|^2_{2}
\leq\alpha.
$$
We consider two types of algorithms:
\begin{itemize}
    \item {\bf The one shot setting.} Let $\AAA$ be an algorithm that takes a multiset $T\subseteq[n]\cup\{\star\}$ and returns a vector in $\R^t$. We say that this algorithm $(\alpha,\beta)$-solves the one shot variant of the ${\rm BQ}_{Q,n,k,t}$ problem if for every $T$, with probability at least $1-\beta$ it holds that $\AAA(T)$ is $\alpha$-accurate w.r.t.\ $T$.

    \item {\bf The continual setting.} Let $r\in\N$ be a parameter. Consider an algorithm $\AAA$ that is initially instantiated with a multiset $T\subseteq[n]\cup\{\star\}$ and returns a vector $\vec{a}_0\in\R^t$. Then, in every round $\ell=1,2,\dots,r$, algorithm $\AAA$ obtains a ``deletion request'' $z_{\ell}\in[n]\cup\{\star\}$ and returns an updated vector $\vec{a}_{\ell}\in\R^t$. We say that this algorithm $(\alpha,\beta)$-solves the ${\rm BQ}_{Q,n,k,t}$ problem over $r$ rounds in the continual setting if for every $T$ and every adaptive strategy for selecting the $z_{\ell}$'s, with probability at least $1-\beta$, for every $\ell$ it holds that $\vec{a}_{\ell}$ is $\alpha$-accurate w.r.t.\ $T\setminus\{z_1,\dots,z_{\ell}\}$.
\end{itemize}
\end{definition}

\begin{theorem}\label{thm:DPoneshot}
Let $n,k,t\in\N$ and $\beta,\eps,\delta\in[0,1]$ be such that 
$$
k\geq\frac{2}{\eps}\log\frac{2}{\beta}
\qquad\text{and}\qquad
t=o\left(\frac{\eps^2 n}{\log(1/\delta)}\right)
\qquad\text{and}\qquad
n=\omega\left(\frac{\log(1/\delta)\log(1/\beta)}{\eps^2}\right).
$$
Then, for any set $Q$ of counting queries, there exists an $(\eps,\delta)$-DP algorithm for $(\alpha,\beta)$-solving the ${\rm BQ}_{Q,n,k,t}$ problem in the one-shot setting for $\alpha=o(n)$.
\end{theorem}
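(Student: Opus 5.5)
The algorithm has two stages, each using half the privacy budget. The first stage privately estimates the number of stars and rounds it to a block index. The second stage answers the $t$ queries in that block with the Gaussian mechanism. Privacy then follows from basic composition and post-processing, and utility from a union bound over the two failure events, each charged $\beta/2$.

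\textbf{Stage 1 (selecting the index).} Compute $\hat s={\rm Star}(T)+\Lap(2/\eps)$ and let $i=\lfloor \hat s/(3k)\rceil$. The function ${\rm Star}(T)$ has sensitivity $1$ with respect to adding or removing one element, so this stage is $(\eps/2,0)$-DP. With probability at least $1-\beta/2$ we have $|\hat s-{\rm Star}(T)|\le \frac{2}{\eps}\ln\frac{2}{\beta}\le k$, using the hypothesis $k\ge\frac{2}{\eps}\log\frac2\beta$ (natural log, or with an adjusted constant). On this event the integer $i$ lies within $k/(3k)=1/3$ (plus rounding) of ${\rm Star}(T)/(3k)$. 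We need $i$ to satisfy the window $\lceil({\rm Star}(T)-k)/(3k)\rceil\le i\le\lfloor({\rm Star}(T)+k)/(3k)\rfloor$. Rounding $\hat s/(3k)$ to the nearest integer could move it by $1/2$, which is more than the window allows. So instead I would take $i$ to be any integer in $[(\hat s-0)/(3k)-\cdot,\,\cdot]$. Concretely, set $i=\lceil(\hat s-k)/(3k)\rceil$ when it is at most $\lfloor(\hat s+k)/(3k)\rfloor$; this is always the case because the interval $[(\hat s-k)/3k,(\hat s+k)/3k]$ has length $2/3$. This patching of the index rule is the step I expect to need the most care.

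\textbf{Stage 2 (answering the block).} Given $i$, release
\[
\vec a=\bigl(q_{(i-1)t+1}(D),\dots,q_{it}(D)\bigr)+N(0,\sigma^2 I_t),
\qquad D=T\setminus\{\star\}.
\]
The query vector has $\ell_2$-sensitivity $\sqrt t$, so $\sigma^2=O(t\log(1/\delta)/\eps^2)$ gives $(\eps/2,\delta)$-DP for every fixed $i$. Since $i$ is a post-processing of the first output, adaptive composition gives $(\eps,\delta)$-DP overall. The difficulty in Stage 1 is that $i$ must satisfy the window for the true ${\rm Star}(T)$, not just for $\hat s$. Whenever the noise is strictly below $k$ one could widen the margin, but the exact constants are tight. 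If they fail, I would replace the noise scale by $1/\eps$, which buys a factor-$2$ slack, at the cost of redistributing the privacy budget as $\eps$ and $\eps$, i.e.\ reparametrizing.

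\textbf{Accuracy.} The mean squared error is $\frac1t\|N\|_2^2$, a normalized $\chi^2_t$ variable times $\sigma^2$. By a standard $\chi^2$ tail bound it is at most $\sigma^2(1+O(\sqrt{\log(1/\beta)/t}+\log(1/\beta)/t))$ with probability $1-\beta/2$, which is $O(\sigma^2+\sigma^2\log(1/\beta))$ in the worst case. With $\sigma^2=O(t\log(1/\delta)/\eps^2)$, this is $o(n)$ under the two hypotheses. The first term is $o(n)$ because $t=o(\eps^2n/\log(1/\delta))$. The $\log(1/\beta)$ term is controlled using $n=\omega(\log(1/\delta)\log(1/\beta)/\eps^2)$; more precisely I would bound the error by $\sigma^2\cdot O(1+\log(1/\beta)/t)$, and then $\sigma^2\log(1/\beta)/t=O(\log(1/\delta)\log(1/\beta)/\eps^2)=o(n)$. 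A union bound over the two stages completes the proof.
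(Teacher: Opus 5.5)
Your Stage~2, the composition argument and the $\chi^2$ accuracy bound match the paper's proof, and they are fine: Laplace noise of scale $2/\eps$ for the index, Gaussian noise with $\sigma^2=\Theta(t\log(1/\delta)/\eps^2)$, a Laurent--Massart-type tail bound, and a union bound. The gap is Stage~1, which you leave unresolved and whose concrete rule is wrong.

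You claim $\lceil(\hat s-k)/(3k)\rceil\le\lfloor(\hat s+k)/(3k)\rfloor$ always holds because the interval has length $2/3$. This is false, since such an interval need not contain an integer: for $\hat s=3k/2$ it is $[1/6,5/6]$. Moreover, an integer in the window around $\hat s$ need not lie in the window around $S={\rm Star}(T)$. For $S=2k$ and $\hat s=k$ (noise exactly $k$), your rule gives $i=0$, while the only valid index is $1$, from the range $[1/3,1]$.

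Your fallback of switching to $\Lap(1/\eps)$ with budget $\eps+\eps$ costs $2\eps$ overall rather than $\eps$. It also cannot help in principle. With the ceiling/floor window of Definition~\ref{def:BQ} read literally, the set of valid indices is empty whenever $S\bmod 3k\in(k,2k)$. For example, $S=3kr+k+1$ with $k\ge 2$ gives the integer-free interval $[r+\frac{1}{3k},\,r+\frac{2k+1}{3k}]$. So no index rule, even one that knows $S$ exactly, succeeds on such $T$.

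The missing idea is to use the nearest-integer window of Inequality~(\ref{eq:validIndex}) from the main text. The paper's proof implicitly relies on this when it declares $\hat i=({\rm Star}(T)+\Lap(2/\eps))/(3k)$ valid whenever $|\Lap(2/\eps)|\le k$. Under that reading, the rule you first wrote down and then discarded, $i=\lfloor\hat s/(3k)\rceil$, works immediately. The map $x\mapsto\lfloor x\rceil$ is nondecreasing, so $S-k\le\hat s\le S+k$ yields $\lfloor (S-k)/(3k)\rceil\le i\le\lfloor (S+k)/(3k)\rceil$. Your worry that rounding moves the value by $1/2$ disappears because the window's endpoints are rounded as well. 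The two readings agree whenever $S$ is a multiple of $3k$, which is all the attack in Theorem~\ref{thm:applyDY} uses.
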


\begin{proof}
The construction is straightforward via the Laplace or Gaussian mechanism. Specifically, given the multiset $T$, we first use the Laplace mechanism to find an appropriate index $i$. That is, we compute:
$$
\hat{i}\leftarrow \frac{{\rm Star}(T) + \Lap(\frac{2}{\eps})}{3k}.
$$
This step satisfies $(\frac{\eps}{2},0)$-DP. Furthermore, the chosen index $\hat{i}$ is valid (according to Definition~\ref{def:BQ}) whenever $|\Lap(\frac{2}{\eps})|\leq k$, which happens with probability at least $1-\beta/2$ provided that $k\geq\frac{2}{\eps}\log\frac{2}{\beta}$. 

Given $\hat{i}$, all that remains is to answer to corresponding set of $t$ queries from $Q$, which we again do via the Gaussian mechanism. Specifically, we answer each of these $t$ queries with independent noise sampled from $\mathcal{N}(0,\sigma^2)$ for $\sigma=\frac{ \sqrt{16 t\log(1/\delta)}}{\eps}$. By composition theorems for DP, this satisfies $(\frac{\eps}{2},\delta)$-DP, and thus (again by composition), the overall algorithm satisfies $(\eps,\delta)$-DP as required.

Furthermore, by standard tail bounds for sums of independent squared normal random variables (aka the chi squared distribution), with probability at least $1-\beta/2$, our average squared errors (denoted as $\alpha$) satisfies 
$$
\alpha\leq \frac{\sigma^2 \left( t + 2\sqrt{t\ln(2/\beta)}+2\ln(2/\beta)\right)}{t}
=
\frac{16\log(1/\delta)\left( t + 2\sqrt{t\ln(2/\beta)}+2\ln(2/\beta)\right)}{\eps^2},
$$
which is $o(n)$ provided that
$$
t=o\left(\frac{\eps^2 n}{\log(1/\delta)}\right) \qquad\text{and}\qquad n=\omega\left(\frac{\log(1/\delta)\log(1/\beta)}{\eps^2}\right),
$$
as required.
\end{proof}

\begin{theorem}\label{thm:applyDY}
Let $n,k,t\in\N$ and $\alpha,\beta\in[0,1]$ be such that $\alpha\leq o(n)$. There exists a set of $n$ counting queries $Q$ and there exists an attacker $\BBB$ such that the following holds.
For any algorithm $\AAA$ for $(\alpha,\beta)$-solving the ${\rm BQ}_{Q,n,k,t}$ problem in the continual setting over $r=\lceil n/t \rceil$ rounds, and for any set $D\subseteq[n]$, if $\AAA$ is instantiated on the multiset $T$ containing $D$ as well as $3kr$ copies of $\star$, and then interacts with $\BBB$ over $r$ rounds (as in Definition~\ref{def:BQ}), then with probability at least $1-\beta$, at the end of the interaction the attacker $\BBB$ outputs a set $\hat{D}$ such that $\left|D\triangle\hat{D}\right|\leq \alpha$.
\end{theorem}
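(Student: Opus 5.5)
The attack deletes the $\star$'s in batches so that, over the $r$ rounds, every block of $t$ queries is forced to be answered. The answers are then fed to a Dinur--Nissim style reconstruction using Fourier/Hadamard-type queries \citep{dwork2008new}.

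\textbf{Setup.} Let $Q=(q_1,\dots,q_n)$ be the fixed family of $n$ counting queries from the reconstruction result of \cite{dwork2008new}. Its key property is that for any $D\subseteq[n]$ and any answer vector $\vec a\in\R^n$ with $\frac1n\|\vec a-(q_j(D))_j\|_2^2\le\alpha'$, a decoder (e.g., least squares followed by rounding) outputs $\hat D$ with $|D\triangle\hat D|=O(\alpha')$. Hadamard rows suffice: the query matrix is orthogonal up to scaling, so the $\ell_2$ error on the answers transfers to the $\ell_2$ error on the indicator vector of $D$.

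\textbf{The attacker.} Initially ${\rm Star}(T)=3kr$, so by the validity inequality of Definition~\ref{def:BQ} the only admissible index is $i=r$. In each round $\ell=1,\dots,r$ the attacker requests deletion of $3k$ copies of $\star$. This is a minor reinterpretation of the model: a round is a batch of $3k$ single deletions, and the attacker only reads the output at the end of each batch. Alternatively, one can run $3kr$ rounds and keep every $3k$-th output. After $\ell$ batches we have ${\rm Star}=3k(r-\ell)$, so the admissible index is exactly $r-\ell$. This uses the ceiling/floor form in Definition~\ref{def:BQ}, which pins down a unique $i$ when ${\rm Star}$ is a multiple of $3k$. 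Hence, over all rounds $\ell=0,\dots,r-1$, the attacker obtains a vector $\vec a^{(i)}$ for each block $i=1,\dots,r$. The non-$\star$ part of $T$ stays equal to $D$ throughout, so these are answers to $q_{(i-1)t+1},\dots,q_{it}$ on $D$.

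\textbf{Accuracy and reconstruction.} On the event of probability $\ge1-\beta$ guaranteed by the continual-setting accuracy, every block satisfies $\frac1t\|\vec a^{(i)}-(q_j(D))_{j\in\text{block } i}\|_2^2\le\alpha$. Concatenating the blocks (truncating the last one if $t\nmid n$) gives $\frac1n\|\vec a-(q_j(D))_{j}\|_2^2\le\alpha$. The reconstruction guarantee then yields $|D\triangle\hat D|\le O(\alpha)$.

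\textbf{Main obstacle.} The hard step is exactly matching the constant, i.e., obtaining $|D\triangle\hat D|\le\alpha$ rather than $O(\alpha)$. I would handle it by normalizing the Hadamard queries so that the decoding map is an isometry. If $H$ is a scaled $\pm1$ orthogonal matrix turned into $0/1$ predicates, then after subtracting the known total count, least squares recovers $\mathbf 1_D$ with $\|\hat x-\mathbf 1_D\|_2^2\le \frac{c}{n}\|\vec a-Q\mathbf 1_D\|_2^2$. Rounding each coordinate at $1/2$ at most quadruples the squared error per misclassified element. If the constants do not come out exactly, I would absorb the factor into the $o(n)$ slack, since the statement only requires $\alpha=o(n)$. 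A secondary point is how to handle the total-count query and the case $t\nmid n$; both are handled by padding $Q$ or adding one extra round.
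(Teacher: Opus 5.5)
Your proposal is essentially the paper's proof. The attacker deletes $3k$ copies of $\star$ per round (the paper reads a ``round'' as a batch in the same way), so the unique admissible index sweeps through all blocks. The per-block accuracy guarantees are then concatenated into an average squared-error bound over all of $Q$, and the Dwork--Yekhanin decoder is invoked as a black box. Like you, the paper only obtains $|D\triangle\hat D|=o(n)$ rather than the literal $\le\alpha$, and your $O(\alpha)$ fallback is the right target: the literal bound cannot be guaranteed in general (for $\alpha\in[1/4,1)$ and $\beta<1/2$, answering every query at the midpoint of its values on $S\cup\{j\}$ and $S\cup\{j'\}$ is $\alpha$-accurate for both inputs and makes them indistinguishable, whereas $\le\alpha$ would demand exact recovery). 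Also, $|D|$ need not be known in advance, since decoding with the noisy all-ones answer in place of the true count corrupts only the single coordinate indexed by the all-ones column, which gives $|D\triangle\hat D|\le 16\alpha+1$ up to padding.
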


\begin{corollary}
Combining Theorems~\ref{thm:DPoneshot} and~\ref{thm:applyDY}, 
for $n,k,t\in\N$ satisfying $k\geq\omega(1)$, and $n=\omega\left( 1 \right)$, and $t=o\left( n \right)$, 
we get that
\begin{enumerate}
    \item The ${\rm BQ}_{Q,n,k,t}$ problem can be $\left(o(n),\beta\right)$-solved in the one shot setting while satisfying $\left(o(1),o(1)\right)$-DP for any constant $\beta$.

    \item Any algorithm for $\left(o(n),\beta\right)$-solving the ${\rm BQ}_{Q,n,k,t}$ problem in the continual setting allows an attacker controlling merely $\omega(1)$ points to reconstruct almost all of the other points from the dataset with probability at least $1-\beta$.
\end{enumerate}
\end{corollary}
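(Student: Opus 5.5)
The corollary follows by plugging the asymptotic regime into the two theorems; the work is choosing the parameters $\eps,\delta$ and checking the hypotheses of each theorem.

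For item 1 I will invoke Theorem~\ref{thm:DPoneshot} with privacy parameters decaying slowly enough to be absorbed by $k=\omega(1)$, $n=\omega(1)$ and $t=o(n)$. Fix a constant $\beta$ and pick $\eps=\eps(n,k,t)\to 0$ and $\delta=\delta(n,k,t)\to0$ such that three conditions hold.
\begin{enumerate}
\item $k\geq \frac{2}{\eps}\log\frac{2}{\beta}$. Since $\beta$ is constant, this only needs $\eps\geq C_\beta/k$, which is compatible with $\eps\to0$ because $k=\omega(1)$.
\item $t=o(\eps^2 n/\log(1/\delta))$. Since $t/n\to 0$, this holds once $\eps^2/\log(1/\delta)$ tends to zero more slowly than $t/n$.
\item $n=\omega(\log(1/\delta)/\eps^2)$. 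Since $n\to\infty$ and $\log(1/\beta)$ is constant, this holds once $\log(1/\delta)/\eps^2$ grows more slowly than $n$.
\end{enumerate}
Concretely, let $\gamma=\max\{1/\sqrt{k},\,(t/n)^{1/4},\,n^{-1/4}\}$, so that $\gamma\to0$, and set $\eps=\gamma^{1/2}$ and $\delta=e^{-1/\gamma^{1/4}}$.

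Checking the conditions with these choices:
\begin{itemize}
\item $\eps k\geq \gamma^{1/2}k\geq k^{3/4}\to\infty$, so condition 1 holds for large $k$.
\item $\log(1/\delta)/\eps^2=\gamma^{-1/4}/\gamma=\gamma^{-5/4}$. Since $\gamma\geq n^{-1/4}$, we get $\gamma^{-5/4}\leq n^{5/16}=o(n)$, so condition 3 holds.
\item $\eps^2 n/\log(1/\delta)=n\gamma^{5/4}\geq n(t/n)^{5/16}$. This is $\omega(t)$ because $(n/t)^{11/16}\to\infty$, so condition 2 holds.
\end{itemize}
Theorem~\ref{thm:DPoneshot} therefore yields an $(\eps,\delta)=(o(1),o(1))$-DP algorithm that $(o(n),\beta)$-solves the one-shot problem. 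The only delicate point in the whole proof is this simultaneous choice of $\eps,\delta$ against three competing constraints.

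For item 2 I will invoke Theorem~\ref{thm:applyDY} with $\alpha=o(n)$. Any algorithm that $(o(n),\beta)$-solves ${\rm BQ}_{Q,n,k,t}$ in the continual setting over $r=\lceil n/t\rceil$ rounds, run on $T=D\cup\{3kr\text{ copies of }\star\}$, lets the attacker $\BBB$ output $\hat D$ with $|D\triangle\hat D|\leq\alpha=o(n)$ with probability at least $1-\beta$. In other words, $\BBB$ reconstructs all but an $o(1)$ fraction of $D$.

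The attacker controls only the $\star$ points, and there are $3kr=3k\lceil n/t\rceil$ of them. Because $t=o(n)$, this count is $\omega(1)$, and because $k=\omega(1)$ it is again $\omega(1)$. So the count is $\omega(1)$, as the corollary states. It is not $O(1)$: the count is bounded below by $n/t\to\infty$, so no constant number of controlled points would suffice.
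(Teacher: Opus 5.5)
Your treatment of item~1 is correct. It is also more explicit than the paper, which offers no argument beyond saying the corollary follows by combining Theorems~\ref{thm:DPoneshot} and~\ref{thm:applyDY}. With $\eps=\gamma^{1/2}$ and $\log(1/\delta)=\gamma^{-1/4}$ you get:
\begin{itemize}
\item $\eps k\ge k^{3/4}$;
\item $\log(1/\delta)/\eps^2=\gamma^{-5/4}\le n^{5/16}$;
\item $\eps^2 n/\log(1/\delta)\ge n^{11/16}t^{5/16}=\omega(t)$.
\end{itemize}
So all three hypotheses of Theorem~\ref{thm:DPoneshot} hold with $(\eps,\delta)=(o(1),o(1))$. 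The $n^{-1/4}$ term in $\gamma$ is redundant, since $t\ge 1$.

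The problem is your last paragraph, where you read ``merely $\omega(1)$'' as a \emph{lower} bound. Showing that $3k\lceil n/t\rceil\to\infty$ says nothing about smallness: an attacker controlling $n^2$ points also controls ``$\omega(1)$'' points. The word ``merely'' asserts the opposite. This matches Theorem~\ref{thm:attackInformal} and the contrast with the $O(n^2)$- and $O(n)$-point attacks in Section~\ref{sec:attacks}: the number of controlled points can be made to grow \emph{arbitrarily slowly}.

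The missing observation is this. The attacker controls exactly $3k\lceil n/t\rceil$ points, which depends only on $k$ and $n/t$. The hypotheses only require $k\to\infty$ and $n/t\to\infty$. So for any slowly growing target $h=\omega(1)$ (say $h\le n$), take $k=\lceil\sqrt h\rceil$ and $t=\Theta(n/\sqrt h)$. Then $k=\omega(1)$, $t=o(n)$, and the attacker controls $O(h)$ points. Your item-1 construction applies to these very same $(n,k,t)$. Hence one and the same problem is $(o(1),o(1))$-DP solvable in one shot, yet reconstructable after only $O(h)$ deletions. That pairing is the content of the corollary, and your write-up never establishes it.

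Your remark that ``no constant number of controlled points would suffice'' is also unsupported. Theorem~\ref{thm:applyDY} gives no lower bound on what other attackers need; it only describes this particular attacker.

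Finally, ``all but an $o(1)$ fraction of $D$'' presumes $|D|=\Omega(n)$, e.g.\ the random $n/2$-subset of Section~\ref{sec:attacks}. Theorem~\ref{thm:applyDY} itself only gives $|D\triangle\hat D|=o(n)$.
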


\begin{proof}[Proof of Theorem~\ref{thm:applyDY}]
The construction follows from the results of
\cite{dwork2008new}, who presented a set of $n$ counting queries $Q$ and an attacker $\BBB_{\rm DY}$ such that for every set $D\subseteq[n]$, 
given approximate answers $a_q\approx q(D)$ for every $q$ in $Q$ satisfying
$$
\frac{1}{n}\sum_{q\in Q} (a_q - q(D))^2\leq o(n),
$$
the attacker $\BBB_{\rm DY}$ can efficiently reconstruct a set $\hat{D}$ with $\left|D\triangle\hat{D}\right|\leq o(n)$.

We implement this attacker in our setting as follows. Let $T$ be the multiset containing $D$ as well as $3kr$ copies of $\star$. Now consider an attacker $\BBB$ that plays against an algorithm $\AAA$ in the continual batch queries problem. Initially, $\AAA$ is instantiated with $T$. Then, in every round, the attacker $\BBB$ deletes $3k$ copies of $\star$. By the definition of the BQ problem, this allows $\BBB$ to obtain approximate answers $a_q$ to every $q\in Q$ such that with probability at least $1-\beta$ we have
$$
\frac{1}{n}\sum_{q\in Q} (a_q - q(D))^2\leq o(n).
$$
This allows $\BBB$ to run $\BBB_{\rm DY}$ to obtain a suitable reconstruction $\hat{D}$.
\end{proof}

\end{document}